\theoremstyle{definition}
\newtheorem{prop}{Proposition}
\newtheorem{lemma}{Lemma}
\newtheorem{remark}{Remark}
\newcommand\E{\mathbb E}
\newcommand\R{\mathbb R}
\newcommand\N{\mathcal N}
\renewcommand\S{\mathcal S}
\newcommand*{\eg}{e.g.\,}
\newcommand*{\ie}{i.e.\,}
\begin{document}
\title{SHAP for additively modeled features \\
	 in a boosted trees model}
\author{Michael Mayer \\
	{\small la Mobilière, Bern, Switzerland} \\
	{\small michael.mayer@mobiliar.ch} \\
}
\date{\today}

\maketitle

\begin{abstract}
	An important technique to explore a black-box machine learning (ML) model is called SHAP (SHapley Additive exPlanation). SHAP values decompose predictions into contributions of the features in a fair way. We will show that for a boosted trees model with some or all features being additively modeled, the SHAP dependence plot of such a feature corresponds to its partial dependence plot up to a vertical shift. We illustrate the result with XGBoost.
\end{abstract}
\textbf{\textit{keywords:}} machine learning, xai, shap, gradient boosted trees
\section{Introduction}
In their 2017 article \cite{lundberg2017}, Scott Lundberg and Su-In Lee introduced a method called ``SHAP'' (SHapley Additive exPlanation) used to explain black-box machine learning (ML) models. Since then, SHAP has received a lot of attention \cite{molnar2019}. SHAP values decompose a prediction into contributions of the covariates in a fair way, using ideas from cooperative game theory. 

Originally intended as a method to explain a model locally around a specific observation, SHAP is nowadays also used to explain a model globally by decomposing many predictions and then visualizing the resulting decompositions by statistical plots, see \cite{lundberg2020, molnar2019}. An important visualization is the SHAP dependence plot of a feature $j$, which is a scatter plot of SHAP values of feature $j$ against the corresponding feature values. It serves as a rich alternative to Friedman's partial dependence plot \cite{friedman2001}. Such global model inspections work especially well for gradient boosted trees implementations like XGBoost \cite{chen2016}, LightGBM \cite{ke2017}, and CatBoost \cite{prokhorenkova2018}, because they benefit from the extremely fast TreeSHAP algorithm for tree-based models introduced in \cite{lundberg2020}.

By using tree stumps as base learners \cite{lou2012, nori2019, mayer2022} or by setting interaction constraints \cite{lee2015, mayer2022}, gradient boosted trees allow to model all or certain features additively, \ie, without interactions. Such additive or partly additive models often provide an excellent trade-off between interpretability and performance, see \cite{mayer2022} for use cases in geographic modeling. Interaction constraints are implemented, \eg, in XGBoost and LightGBM.

In this paper, we will show that the SHAP dependence plot of an additively modeled feature in a boosted trees model is identical to its partial dependence plot up to a vertical shift. This makes the two types of effect plots interchangeable in this situation.

In the next section, we will provide the necessary background and present our results. In Section~\ref{sec:illustration}, we illustrate these with the statistical software R \cite{r}.
\section{Results}
SHAP is based on Shapley values \cite{shapley1953} from cooperative game theory: Let $\N$ denote a set of $p = |\N|$ players playing a cooperative game with a numerical payoff to be maximized. The contribution to the payoff of a subset $\S \subseteq \N$ of the players is measured by a function $v: \S \mapsto \R$. Shapley's seminal article \cite{shapley1953} answers the question ``How to distribute the total payoff fairly among the players?'' as follows: The amount that player $j$ should receive is called {\em Shapley value} and equals
\begin{equation}\label{eq:shapley}
	\phi_j = \sum_{\S \subseteq \N \setminus\{j\}} \underbrace{\frac{|\S|!(p - |\S| - 1)!}{p!}}_\text{Shapley weight} \big(\underbrace{v(\S \cup \{j\}) - v(\S)}_{\text{Contribution of } j}\big), \ \ j = 1, \dots, p.
\end{equation}
Thus, a player's contribution is equal to the weighted average of his contribution to each possible coalition $\S$ of other players. ``Fairness'' is characterized by a number of properties, see \cite{shapley1953} for the original formulation or, \eg, \cite{aas2021} for a modern take.

Shapley values are not only relevant in game theory, but also in statistical modeling: For example, \cite{lipovetsky2001} used it as a strategy to decompose the R-squared of a linear regression into additive contributions of the covariates in a fair way. Their approach involved retraining the model for each of the $2^p$ feature subsets. The major breakthrough of Shapley values in statistics and ML came with the 2017 article \cite{lundberg2017} by Scott Lundberg and Su-In Lee who used Shapley values to decompose {\em predictions} into feature contributions called SHAP values. The outcome of the cooperative game is the prediction, and the features are the players, see also \cite{strumbelj2010, strumbelj2014}. 

The prediction $f(x)$ of a given feature vector $x = (x_1, \dots, x_p)' \in \R^p$ is to be decomposed into contributions $\phi_j \in \R$, $1 \le j \le p$, such that ``local accuracy'' (also called ``efficiency'') \cite{lundberg2020}
\begin{equation}\label{eq:localaccuracy}
	f(x) = \phi_o + \sum_{j = 1}^p \phi_j
\end{equation}
holds, where $\phi_o = \E_X(f(X))$. Only if the $\phi_j$ are Shapley values, the decomposition will be fair in the sense of Shapley. In order to apply Equation~\ref{eq:shapley}, a contribution function $v$ must be selected. A natural candidate is $v(\S) = f_\S(x)$, where $x_\S$ represents the components in the feature subset $\S \subseteq \N$ selected from the full feature set $\N$. 

Important algorithms to estimate $f_\S(x)$ include Kernel SHAP \cite{lundberg2017, aas2021} (model-agnostic), and TreeSHAP \cite{lundberg2020} (for tree-based models). Thanks to the latter, the predictions of many (for instance $n = 1000$) observations can be decomposed within seconds. Analyzing their SHAP values with simple descriptive statistics leads to rich global explanations of the model \cite{lundberg2020}.

Let $X$ be the $(n \times p)$ feature matrix with elements $x_{ij}$ and $\Phi$ the corresponding $(n \times p)$ matrix of SHAP values with elements $\phi_{ij}$, $1 \le i \le n$, $1 \le j \le p$. Furthermore, let $\phi_o$ be the average model prediction over all $n$ observations. Thanks to the local accuracy property~\ref{eq:localaccuracy},
$$
	f(x^{(i)}) = \phi_o + \sum_{j = 1}^{p} \phi_{ij}, \ 1 \le i \le n,
$$
where $x^{(1)}, \dots, x^{(n)}$ are the $n$ feature vectors.
In order to visualize the effect of the $j$-th feature, one then considers the so-called SHAP dependence plot representing the graph
$$
\{(x_{ij}, \phi_{ij}), 1 \le i \le n\}.
$$
This scatter plot serves as an informative alternative to the partial dependence plot \cite{friedman2001}. The empirical (one-dimensional) partial dependence function of feature $x_j$ is given by
$$
  \text{PD}_j(v) = \frac{1}{n} \sum_{i = 1}^n f(v, x_{\setminus j}^{(i)}),
$$
where $x_{\setminus j}^{(i)}$ denotes all but the $j$-th component of $x^{(i)}$. The partial dependence plot (PDP) represents the graph 
$$
	\{(v_k, \text{PD}_j(v_k)), 1 \le k \le K\}
$$ 
for a grid of values $v_k$. It shows how $f$ reacts on changes in $x_j$ (averaged over potential interaction effects), while keeping everything else fixed (``ceteris paribus''). For models with additive $x_j$, the PDP is parallel to any individual conditional expectation (ICE) curve \cite{goldstein2015}, and gives a complete description of the effect of $x_j$ \cite{friedman2001}. Interestingly, Friedman describes in \cite{friedman2001} (page 27) an algorithm to calculate the PDP of a decision tree for the full training data using an efficient method {\em identical} to the (path-dependent) TreeSHAP Algorithm 1 \cite{lundberg2020}, a fact that was also pointed out in an online comment \cite{hug2019}. (Since our result is not limited to path-dependent TreeSHAP, we do not make use of this fact.)

In order to proof our main result, we provide a simple Lemma on SHAP values of univariable models.
\begin{lemma}\label{lem}
Let $g$ be a model with single feature $z \in \R$ and partial dependence function $z \mapsto \text{PD}(z) = g(z)$. Furthermore, let $z^{(i)}$, $1 \le i \le n$, denote $n$ observations of $z$ with SHAP values $\phi^{(i)}$ satisfying local accuracy~\ref{eq:localaccuracy}. Then, for all $1 \le i \le n$,
$$
	\phi^{(i)} = g(z^{(i)}) - \phi_o = \text{PD}(z^{(i)}) - \phi_o,
$$
for a constant $\phi_o$, typically (but not necessarily) $\phi_o = \frac{1}{n} \sum_{i = 1}^n g(z^{(i)})$.
\end{lemma}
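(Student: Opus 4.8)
The plan is to recognize that, for a univariable model, both claimed equalities are immediate consequences of the definitions already in place; the only substantive content is the local accuracy property applied with a single feature, and the only point requiring care is the interpretation of the constant $\phi_o$.

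First I would specialize local accuracy~\ref{eq:localaccuracy} to $p = 1$. The decomposition of the single prediction then reads $g(z^{(i)}) = \phi_o + \phi^{(i)}$, and rearranging gives the first equality
$$
\phi^{(i)} = g(z^{(i)}) - \phi_o, \quad 1 \le i \le n.
$$
This uses nothing more than the hypothesis that the $\phi^{(i)}$ satisfy local accuracy. Equivalently, one could read the result off the Shapley formula~\ref{eq:shapley} directly: with $p = 1$ the only coalition of the remaining players is $\S = \emptyset$, so $\phi^{(i)} = v(\{1\}) - v(\emptyset) = g(z^{(i)}) - \phi_o$, the Shapley weight being $1$.

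Next I would dispatch the second equality. Since the model has only one feature, there are no other components to hold fixed, and the averaging in the definition of the partial dependence function collapses,
$$
\text{PD}(v) = \frac{1}{n}\sum_{i=1}^n g(v) = g(v),
$$
so that $\text{PD}(z^{(i)}) = g(z^{(i)})$. (This is in fact already asserted in the hypothesis of the lemma.) Substituting into the first equality yields $\phi^{(i)} = \text{PD}(z^{(i)}) - \phi_o$, closing the chain.

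I do not expect any genuine obstacle---the lemma is essentially a bookkeeping statement---so the effort goes into making the role of $\phi_o$ precise. The argument fixes the SHAP value only up to the additive baseline $\phi_o = v(\emptyset)$ inherited from the contribution function. The canonical normalization $\phi_o = \frac{1}{n}\sum_{i=1}^n g(z^{(i)})$ centers the SHAP values and recovers the ``typical'' value in the statement, but since local accuracy holds for any consistent choice of baseline, the conclusion remains valid for an arbitrary constant $\phi_o$, which is precisely the generality the lemma claims.
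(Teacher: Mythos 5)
Your proof is correct and follows essentially the same route as the paper: local accuracy with a single feature gives $\phi^{(i)} = g(z^{(i)}) - \phi_o$, and the hypothesis $\text{PD}(z) = g(z)$ yields the second equality. The extra remarks on the Shapley formula for $p=1$ and on the role of the baseline $\phi_o$ are accurate but not needed beyond what the paper's own two-line argument already establishes.
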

\begin{proof}
	From local accuracy, $g(z^{(i)}) = \phi_o + \phi^{(i)}$, $1 \le i \le n$, for some constant $\phi_o$. Thus, $\phi^{(i)} = g(z^{(i)}) - \phi_o$. Plugging $z = z^{(i)}$ into $\text{PD}(z) = g(z)$ finishes the proof. 
\end{proof}
From Lemma~\ref{lem}, the main result follows:
\begin{prop}\label{prop}
	Let $f$ be a boosted trees model. Denote by $x_{ij}$ the $i$-th value of the $j$-th feature in a sample of size $n$, $1 \le i \le n$, $1\le j \le p$, and by $\phi_{ij}$ the corresponding SHAP values obtained from the (path-dependent or interventional) TreeSHAP algorithm in  \cite{lundberg2020}. If the model is additive in a specific feature $x_{j'}$, then there exists $c \in \R$ such that $\phi_{ij'} = \text{PD}_{j'}(x_{ij'}) + c$ for all $1\le i \le n$, where the partial dependence function $\text{PD}_{j'}$ is calculated from arbitrary feature vectors.
\end{prop}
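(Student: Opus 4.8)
The plan is to exploit the additive structure imposed by the interaction constraint together with the linearity and dummy-player properties of (Tree)SHAP, so as to reduce the problem to the univariable situation already settled by Lemma~\ref{lem}. First I would make precise what ``additive in $x_{j'}$'' means at the level of the ensemble: every tree either splits on $x_{j'}$ and on no other feature, or does not split on $x_{j'}$ at all. Grouping the trees accordingly lets me write the model as a sum
$$
	f(x) = g(x_{j'}) + h(x_{\setminus j'}),
$$
where $g$ collects the trees depending only on $x_{j'}$ and $h$ collects the remaining trees, none of which depends on $x_{j'}$.

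Next I would transfer this decomposition to the SHAP values. Both TreeSHAP variants compute the SHAP values of an ensemble tree by tree and then add them up, so the attribution $\phi_{ij'}$ splits into the $j'$-contribution coming from the trees in $g$ plus the $j'$-contribution coming from the trees in $h$. Since $h$ does not involve $x_{j'}$, its $j'$-contribution vanishes by the dummy-player property, which holds tree-wise for both variants. Hence $\phi_{ij'}$ equals the SHAP value of the single feature $x_{j'}$ in the univariable model $g$. I would then invoke Lemma~\ref{lem} with $z = x_{j'}$ and model $g$: because $g$ has only one feature, local accuracy forces $\phi_{ij'} = g(x_{ij'}) - \phi_o^{(g)}$ for some constant $\phi_o^{(g)}$.

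To finish, I would relate $g$ to the partial dependence function. Plugging the decomposition into $\text{PD}_{j'}$ over whatever $m$ feature vectors are used to form the PDP gives
$$
	\text{PD}_{j'}(v) = g(v) + \frac{1}{m}\sum_{k = 1}^m h(x_{\setminus j'}^{(k)}),
$$
and the $h$-average is a constant independent of $v$ — which is precisely why the ``arbitrary feature vectors'' clause in the statement does no harm, as it only changes this constant. Combining the three displays yields $\phi_{ij'} = \text{PD}_{j'}(x_{ij'}) + c$ with $c = -\phi_o^{(g)} - \frac{1}{m}\sum_{k} h(x_{\setminus j'}^{(k)})$, as claimed.

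The step I expect to be the main obstacle is justifying the SHAP decomposition in the second paragraph for the \emph{actual} TreeSHAP estimators rather than only for exact Shapley values. Linearity and the dummy-player property are axiomatic for exact Shapley values, but here I must check that the path-dependent and interventional TreeSHAP algorithms inherit them. They do, since each processes the trees of the ensemble independently and assigns zero attribution to a feature that is absent from a given tree; making this tree-wise argument precise for both variants is the one place where I would appeal to the internal workings of the algorithms rather than to the abstract Shapley axioms.
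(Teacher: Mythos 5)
Your proposal is correct and takes essentially the same route as the paper's proof: both split the ensemble into trees that split only on $x_{j'}$ and trees that never split on $x_{j'}$, kill the latter's contribution via the dummy/missingness property of TreeSHAP, reduce the former to the univariable case handled by Lemma~\ref{lem}, and recombine by linearity. The only cosmetic difference is that you aggregate the $x_{j'}$-trees into a single function $g$ before invoking the lemma, whereas the paper applies it tree by tree.
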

\begin{proof}
	Let $f$ be the sum of $K\ge 1$ decision trees $f_1, \dots, f_K$. Since $f$ is additive in $x_{j'}$, some trees split only on $x_{j'}$, and the others do not split on $x_{j'}$. Let $M_1 \subset \{1, \dots, K\}$ denote the indices of the first group, and $M_2 = \{1, \dots, K\} \setminus M_1$ the others. Denote by $\phi^{(k)}_{ij'}$ the SHAP values of the $k$ tree for the $j'$-th feature, $1 \le i \le n$.
	Let $k \in M_2$. From the missingness (or dummy)  property satisfied by the TreeSHAP algorithms in \cite{lundberg2020}, $\phi^{(k)}_{ij'} = 0$. Furthermore, the partial dependence function does not depend on $v$, \ie, $\text{PD}_{j'}(v) = c_k$ for some constant $c_k$. Thus, $\phi_{ij'} = \text{PD}_{j'}(x_{ij'}) - c_k$, and the claim holds for trees in $M_2$.
	Now, let $k \in M_1$. The prediction function $f_k$ only depends on the component $j'$ and thus we can define the univariable function $f^*_k(x_{j'}) := f_k(x)$. Applying Lemma~\ref{lem} with $g = f^*_k$ shows the claim for trees in $M_1$. By additivity, since the claim holds for all trees in $M_1$ and $M_2$, the claim of Proposition~\ref{prop} follows. 
\end{proof}
\begin{remark}[Practical consequences]
\begin{itemize}
	\item[]
	\item Up to a vertical shift, the SHAP dependence plot corresponds to the PDP (and any ICE curve) evaluated at unique values of $x_{ij'}$. 
	\item Just like the PDP, the SHAP dependence plot can be interpreted ceteris paribus. Thus, it gives a complete description of the effect of $x_{j'}$.
	\item Two observations with equal value in the $j'$-th feature will receive the same SHAP value for that feature.
	\item The result holds for models with any number of additive features.
\end{itemize}
\end{remark}
\begin{remark}[Case weights]
	It is easy to see that the proof of Proposition~\ref{prop} also works in modeling situation involving case weights, just with different constants.
\end{remark}
\begin{remark}[Link functions]
	For models involving a link function, some implementations of TreeSHAP provide SHAP values on the link scale. In such a case, the correspondence with the PDP will occur on the link scale, see Section~\ref{sec:illustration} for an example with a log link.
\end{remark}
\begin{remark}[Monotonicity]
	In addition to using interaction constraints or limiting the maximum tree depth to 1, most boosted trees implementations allow to force some effects to be monotonically increasing or decreasing. This provides another way to structure a model. For additive features, thanks to the ceteris paribus interpretation implied by Proposition~\ref{prop}, monotonicity is visible not only in the PDP, but also in the SHAP dependence plot.
\end{remark}
\section{Illustration}\label{sec:illustration}
To illustrate the main result, we use a subset of an insurance dataset from OpenML (ID 42876) \url{https://www.openml.org/d/42876} synthetically generated by Colin Priest. It describes workers compensation claims regarding their ultimate loss, the initial claim amount, and other information. 

We will first describe all relevant variables and then study the result of two boosted trees models for the ultimate loss. The first model is additive in all covariates, while the second model is additive only in some of the features.
\subsection{Data}
The prepared data contains 82,017 observations of the ultimate claim amount in USD (= the model response \texttt{Ultimate}) and the following ten features:
\begin{itemize}[itemsep=0pt]
	\item \texttt{LogInitial}: Logarithmic initial case reserve in USD
	\item \texttt{LogWeeklyPay}: Logarithmic salary per week in USD
	\item \texttt{LogDelay}: Logarithmic number of days from accident to reporting
	\item \texttt{LogAge}: Logarithm of age of worker
	\item \texttt{DateNum}: Decimal year (1988 - 2007)
	\item \texttt{PartTime}: Dummy (1: part time worker, 0: full time worker)
	\item \texttt{Female}: Dummy (1: Female, 0: Other)
	\item \texttt{Married}: Dummy (1: Married, 0: Other)	
	\item \texttt{WeekDay}: Weekday (1: Monday, 7: Sunday)	
	\item \texttt{Hour}: Hour of accident (0-23)					
\end{itemize}
Example rows of the prepared data look as follows:
\begin{Verbatim}[fontsize=\small, frame=single]
Ultimate LogInitial LogWeeklyPay LogDelay LogAge DateNum
    102.       9.16         6.21     2.48   3.81   2005.
   1451        8.01         5.65     3.00   3.69   1995.
    320.       6.91         6.25     3.69   3.91   2002.
    108        4.70         5.30     2.94   2.94   1995.
   7111.       9.18         6.64     3.22   2.94   2005.
   8379.       9.16         5.30     4.19   3.04   2002.
   
PartTime Female Married WeekDay  Hour
       0      0       0       4     9
       0      0       1       3    15
       0      1       0       4     7
       0      0       0       4    14
       0      0       0       5    14
       0      1       0       3    12
\end{Verbatim}
The response and the features are summarized by Figures~\ref{fig:response}--\ref{fig:bar}.
\begin{figure}
	\centering
	\includegraphics[width=0.95\textwidth]{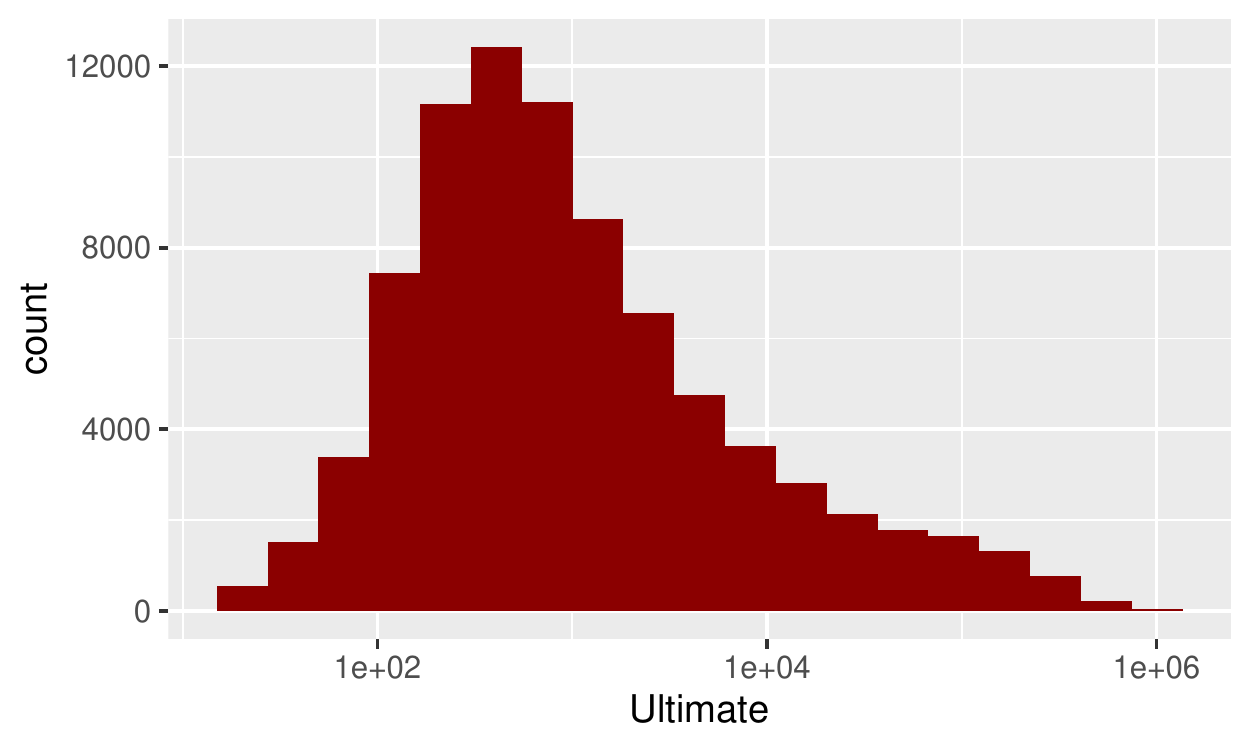}
	\caption{Histogram of (logarithmic) response.}
	\label{fig:response}
\end{figure}
\begin{figure}
	\centering
	\includegraphics[width=0.99\textwidth]{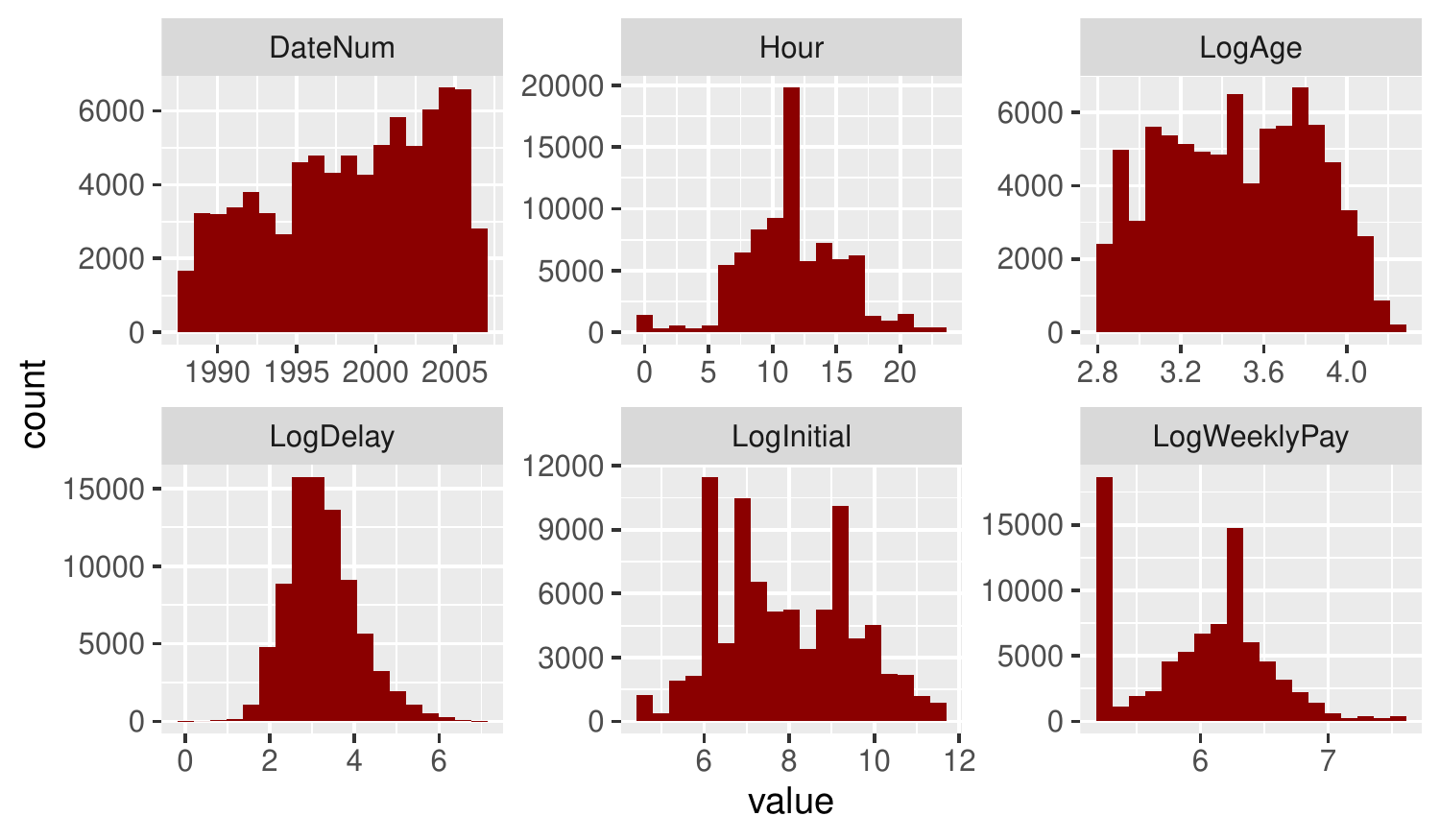}
	\caption{Histograms of continuous features.}
	\label{fig:hist}
\end{figure}
\begin{figure}
	\centering
	\includegraphics[width=0.99\textwidth]{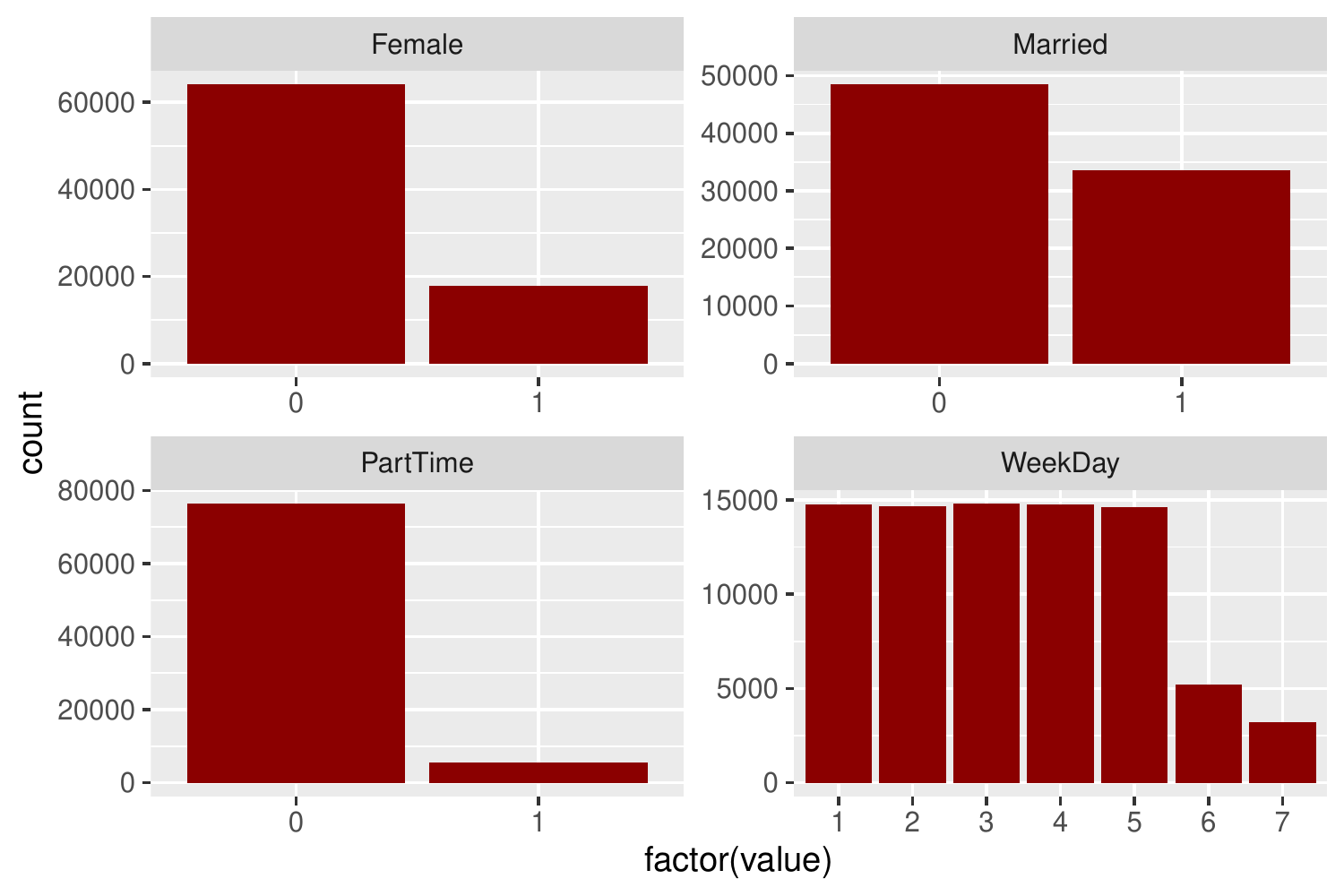}
	\caption{Barplots of discrete features.}
	\label{fig:bar}
\end{figure}
\subsection{Models}
We consider two Gamma regression models fitted with XGBoost (and using an implicit log link). The models are trained on 80\% randomly selected rows. The inspection by PDPs and SHAP dependence plots is done on $1000$ rows sampled from the training data. In order to better compare the two types of plots, and in line with Proposition~\ref{prop}, we evaluate the PDPs at all unique feature values of the subsample and shift the SHAP values vertically such that they are placed slightly below the PDP. The plots are shown on the log link scale. 
\subsubsection{Additive XGBoost model}
The first model represents each feature additively, mimicking a generalized additive Gamma regression without interactions. This is enforced by setting the tree depth to one ({\ttfamily max\_depth = 1}). The other parameters were chosen by random grid search using five-fold cross-validation on the training data. The number of boosting rounds ($484$) was found by early stopping.

Figure~\ref{fig:add} indicates that the SHAP dependence plots are indeed parallel to the PDPs. Thus, we can interpret a SHAP dependence plot like a PDP: For instance, we can say that the effect of being female increases the log expected ultimate by about $8.85 - 8.53 = 0.32$, everything else being fixed. By the absence of interaction effects, this serves as a complete description of the effects.
\begin{figure}
	\centering
	\includegraphics[width=0.98\textwidth]{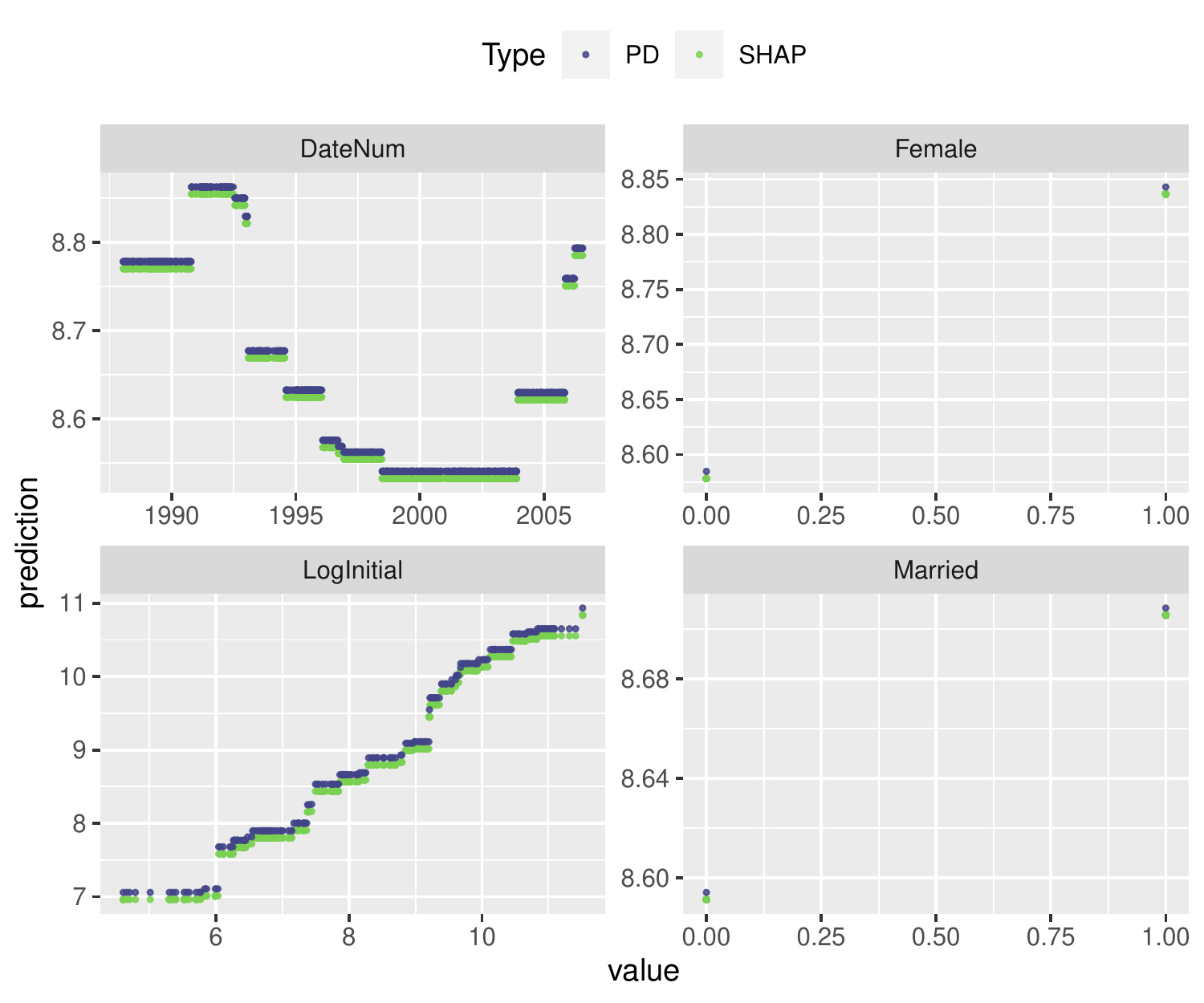}
	\caption{PDPs and (shifted) SHAP dependence plots for selected features of the additive model. The two types of plots are parallel.}
	\label{fig:add}
\end{figure}
\subsubsection{Partly additive XGBoost model}
The second model is constructed such that \texttt{DateNum} and \texttt{Female} receive additive effects, while letting other feature pairs interact with each other. 

This structure is achieved by setting \texttt{max\_depth = 2} and using interaction constraints \texttt{list(c(0, 1, 2, 3, 5, 7, 8, 9), 4, 6)} (the 0-based indices 4 and 6 represent the two additive features). Further parameters were found by a similar strategy as in the first model. The partly additive model is built from $202$ trees. Figure~\ref{fig:part} shows that the two additively modeled features yield SHAP dependence plots perfectly parallel to the PDPs. For the other features, the SHAP dependence plots show vertical scatter, indicating interaction effects.
\begin{figure}
	\centering
	\includegraphics[width=0.98\textwidth]{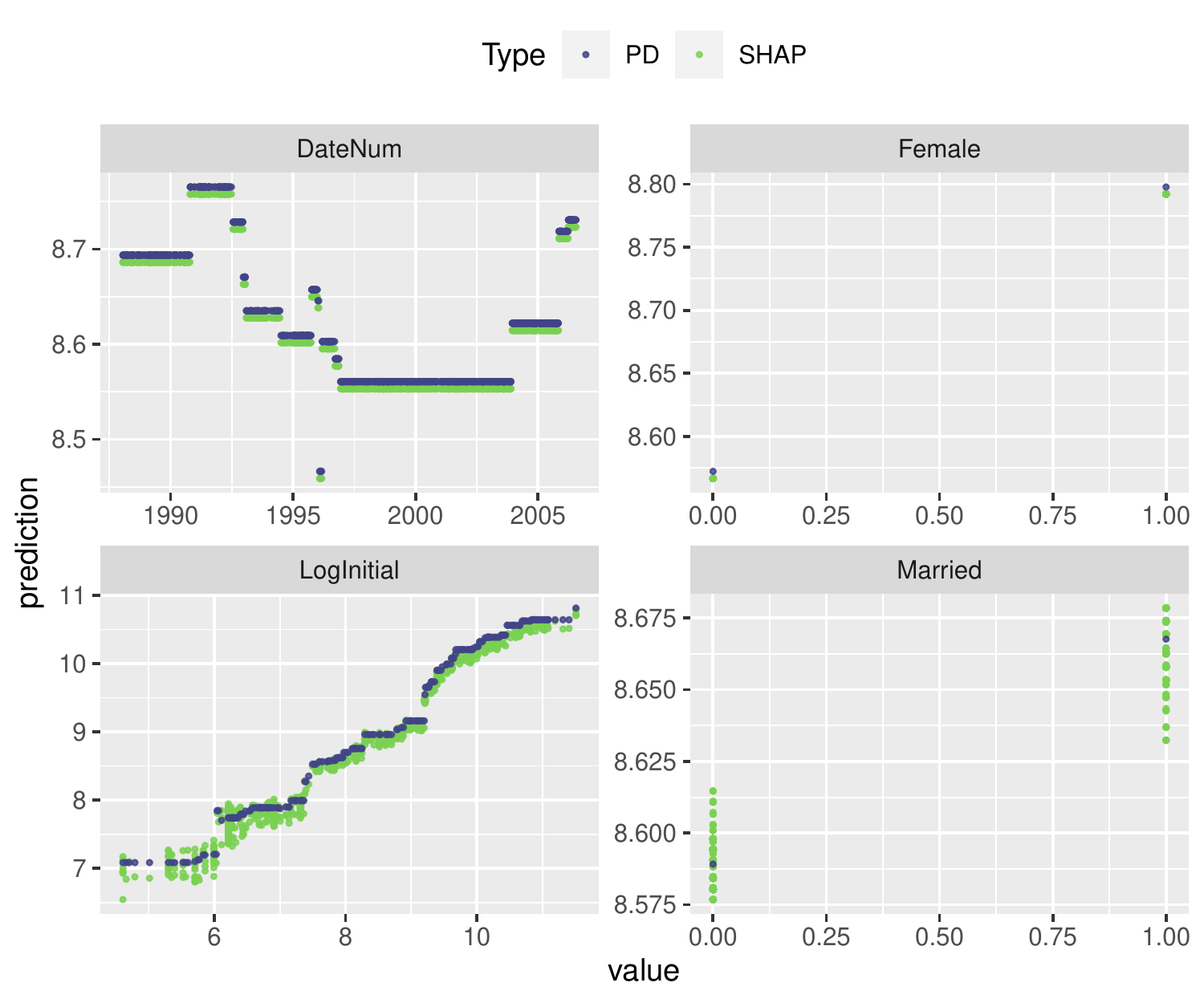}
	\caption{PDPs and (shifted) SHAP dependence plots of selected features for the partly additive model. The two types of plots are parallel only for \texttt{DateNum} and \texttt{Female}, \ie, the additively modeled covariates.}
	\label{fig:part}
\end{figure}
\section{Discussion}
We have investigated the question how to interpret SHAP dependence plots of features modeled additively in a boosted trees model. Thanks to the correspondence with PDPs, they can be interpreted ceteris paribus. For such variables, one can examine either SHAP dependence plots or PDPs, whichever is more appropriate. For example, if the model has an additive treatment effect and involves interactions between other covariates, then one can study SHAP dependence plots to see the ceteris paribus effect of treatment and still get information on interaction effects.
\bibliographystyle{plain}
\bibliography{biblio}
\appendix
\setcounter{secnumdepth}{0}
\section{Appendix: R Code}
The following R code completely reproduces the results of Section~\ref{sec:illustration}.
\begin{Verbatim}[fontsize=\small]
# R 4.2.0
library(OpenML)    # 1.10
library(farff)     # 1.1.1
library(tidyverse) # 1.3.1
library(lubridate) # 1.8.0
library(withr)     # 2.5.0
library(xgboost)   # 1.6.0.1

#=================================================================
# Download and save dataset
#=================================================================

main <- file.path("r", "workers_compensation")

raw_file <- file.path(main, "raw.rds")
if (!file.exists(raw_file)) {
  if (!dir.exists(main)) {
    dir.create(main)
  }
  raw <- tibble(getOMLDataSet(data.id = 42876)$data)
  saveRDS(raw, file = raw_file)
} else {
  raw <- readRDS(raw_file)
}

#=================================================================
# Preprocessing
#=================================================================

# Clip small and/or large values
clip <- function(x, low = -Inf, high = Inf) {
  pmax(pmin(x, high), low)  
}

prep <- raw %>% 
  filter(WeeklyPay >= 200, HoursWorkedPerWeek >= 20) %>% 
  mutate(
    Ultimate = clip(UltimateIncurredClaimCost, high = 1e6),
    LogInitial = log(clip(InitialCaseEstimate, 1e2, 1e5)),
    LogWeeklyPay = log(clip(WeeklyPay, 100, 2000)),
    LogAge = log(clip(Age, 17, 70)),
    Female = (Gender == "F") * 1L,
    Married = (MaritalStatus == "M") * 1L,
    PartTime = (PartTimeFullTime == "P") * 1L,
    DateTimeOfAccident = as_datetime(DateTimeOfAccident),
    LogDelay = log1p(
      as.numeric(ymd(DateReported) - as_date(DateTimeOfAccident))
    ),
    DateNum = decimal_date(DateTimeOfAccident),
    WeekDay = wday(DateTimeOfAccident, week_start = 1),
    Hour = hour(DateTimeOfAccident)
  )

# Lost claim amount -> in practice, use a correction factor (0.01)
1 - with(prep, sum(Ultimate) / sum(UltimateIncurredClaimCost)) 

#=================================================================
# Variable groups
#=================================================================

y_var <- "Ultimate"
x_continuous <- c("LogInitial", "LogWeeklyPay", "LogDelay", 
                  "LogAge", "DateNum", "Hour")
x_discrete <- c("PartTime", "Female", "Married", "WeekDay")
x_vars <- c(x_continuous, x_discrete)

#=================================================================
# Univariate analysis
#=================================================================

# Some rows
prep %>%
  select(all_of(c(y_var, x_vars))) %>%
  head()

# Histogram of the response on log scale
ggplot(prep, aes(.data[[y_var]])) +
  geom_histogram(bins = 19, fill = "darkred") +
  scale_x_log10()
# ggsave("response.pdf", width = 5, height = 3)

# Histograms of continuous predictors
prep %>% 
  select(all_of(x_continuous)) %>% 
  pivot_longer(everything()) %>% 
  ggplot(aes(value)) +
  geom_histogram(bins = 19, fill = "darkred") +
  facet_wrap(~ name, scales = "free")
# ggsave("histograms.pdf", width = 6, height = 3.5)

# Barplots of discrete predictors
prep %>% 
  select(all_of(x_discrete)) %>% 
  pivot_longer(everything()) %>% 
  ggplot(aes(factor(value))) +
  geom_bar(fill = "darkred") +
  facet_wrap(~ name, scales = "free")
# ggsave("barplots.pdf", width = 6, height = 4)

#=================================================================
# Data splits for models
#=================================================================

with_seed(656, 
  .in <- sample(nrow(prep), 0.8 * nrow(prep), replace = FALSE)
)

train <- prep[.in, ]
test <- prep[-.in, ]
y_train <- train[[y_var]]
y_test <- test[[y_var]]
X_train <- prep[.in, x_vars]
X_test <- prep[-.in, x_vars]

#=================================================================
# Functions
#=================================================================

# Model agnostic PDP, not the fast one for trees
partial_dependence <- function(model, v, data) {
  grid <- unique(data[, v])
  pd <- numeric(length(grid))
  for (i in seq_along(grid)) {
    data[, v] <- grid[i]
    pd[i] <- mean(
      predict(model, data.matrix(data), outputmargin = TRUE)
    )
  }
  setNames(data.frame(grid, pd), c("value", "prediction"))
}

shap <- function(model, v, data) {
  s <- predict(model, data.matrix(data), predcontrib = TRUE)
  data.frame(value = data[, v], prediction = s[, v])
}

pdp_shap <- function(model, v, data) {
  pd <- partial_dependence(model, v, data)
  shp <- shap(model, v, data)
  eps <- diff(range(pd$prediction)) / 40
  
  # Shift SHAP values a bit (otherwise they would overlap PD)
  m <- pd[1, "value"]
  shift_shp <- pd[pd$value == m, "prediction"] - 
    mean(shp[shp$value == m, "prediction"])
  shp <- transform(shp, prediction = prediction + shift_shp - eps)
  dat <- bind_rows(list(PD = pd, SHAP = shp), .id = "Type")
  cbind(dat[order(dat$Type, decreasing = TRUE), ], Feature = v)
}

pdp_shap_multi <- function(model, vs, data) {
  dat <- bind_rows(lapply(vs, function(v) pdp_shap(model, v, data)))
  ggplot(dat, aes(value, prediction, Type, color = Type)) +
    geom_point(alpha = 0.8, size = 0.6) +
    facet_wrap(reformulate("Feature"), scales = "free") +
    scale_color_viridis_d(begin = 0.2, end = 0.8) +
    theme(legend.position = "top")
}

#=================================================================
# XGBoost: Additive model 
#=================================================================

nrounds <- 484

params <- list(
  learning_rate = 0.1, 
  objective = "reg:gamma", 
  max_depth = 1, 
  colsample_bynode = 1, 
  subsample = 1, 
  reg_alpha = 3, 
  reg_lambda = 1, 
  tree_method = "hist", 
  min_split_loss = 0.001, 
  nthread = 7 # adapt
)

dtrain <- xgb.DMatrix(data.matrix(X_train), label = y_train)

with_seed(3089, 
  xgb_add <- xgb.train(
    params = params, data = dtrain, nrounds = nrounds
  )
)

# Inspect
with_seed(536,
  X_small <- data.frame(sample_n(X_train, 1000))
)
vs <- c("Female", "DateNum", "LogInitial", "Married")

pdp_shap_multi(xgb_add, vs, X_small)
# ggsave("xgb_add.pdf", width = 6, height = 5)

#=================================================================
# XGBoost: Partly additive model
#=================================================================

# Build interaction constraint vector
additive <- c("Female", "DateNum")
ic <- c(
  list(which(!(x_vars %in% additive)) - 1),
  as.list(which(x_vars %in% additive) - 1)
)
ic

nrounds2 <- 202

params2 <- list(
  learning_rate = 0.1, 
  objective = "reg:gamma", 
  max_depth = 2, 
  interaction_constraints = ic,
  colsample_bynode = 0.8, 
  subsample = 1, 
  reg_alpha = 0, 
  reg_lambda = 2, 
  min_split_loss = 0, 
  tree_method = "hist", 
  nthread = 7
)

with_seed(839, 
  xgb_part_add <- xgb.train(
    params = params2, data = dtrain, nrounds = nrounds2
  )
)

pdp_shap_multi(xgb_part_add, vs, X_small)
# ggsave("xgb_part_add.pdf", width = 6, height = 5)
\end{Verbatim}
\end{document}